\newtheorem{thm}{Theorem}
\newtheorem{lemma}[thm]{Lemma}
\newenvironment{proof}{\paragraph{\ Proof:}}{\hfill$\square$}
\providecommand{\tabularnewline}{\\}
\def\eqref#1{equation~\ref{#1}}
\def\1{\bm{1}}
\DeclareMathAlphabet{\mathsfit}{\encodingdefault}{\sfdefault}{m}{sl}
\SetMathAlphabet{\mathsfit}{bold}{\encodingdefault}{\sfdefault}{bx}{n}
\title{DEEP CLUSTERING USING ADVERSARIAL NET BASED
CLUSTERING LOSS}
\author{Kart-Leong Lim \thanks{ Use footnote for providing further information
about author (webpage, alternative address)---\emph{not} for acknowledging
funding agencies.  Funding acknowledgements go at the end of the paper.} \\
Institute of Microelectronics\\
2 Fusionopolis Way\\
Singapore, 138634\\
\texttt{\{limkl\}@ime.a-star.edu.sg}  
}
\begin{document}
\maketitle 
\begin{abstract}
Deep clustering is a recent deep learning technique which combines
deep learning with traditional unsupervised clustering. At the heart
of deep clustering is a loss function which penalizes samples for
being an outlier from their ground truth cluster centers in the latent
space. The probabilistic variant of deep clustering reformulates the
loss using $KL$ divergence. Often, the main constraint of deep clustering
is the necessity of a closed form loss function to make backpropagation
tractable. Inspired by deep clustering and adversarial net, we reformulate
deep clustering as an adversarial net over traditional closed form
$KL$ divergence. Training deep clustering becomes a task of minimizing
the encoder and maximizing the discriminator. At optimality, this
method theoretically approaches the $JS$ divergence between the distribution
assumption of the encoder and the discriminator. We demonstrated the
performance of our proposed method on several well cited datasets
such as MNIST, REUTERS and CIFAR10, achieving on-par or better performance
with some of the state-of-the-art deep clustering methods.
\end{abstract}

\section{Introduction}

Deep neural network such as the autoencoder is applied to many signal
processing domains such as speech and audio processing \cite{8769885,8998226},
image clustering \cite{huang2019deep,lim2020} and medical data processing
\cite{8903437,9181498}. While the latent space of autoencoder is
well suited for dimension reduction through reconstruction loss \cite{song2013auto},
the latter is not optimized for clustering/ classification since class
labels cannot be used in the reconstruction loss \cite{min2018survey}.
A recent autoencoder technique known as the Euclidean distance based
clustering (ABC) \cite{song2013auto}, utilize class label information
in neural network by introducing a loss function known as the deep
clustering loss. The goal is to minimizes the Euclidean distance in
the latent space between samples and partitioning learnt by clustering
algorithms e.g. Kmeans or Gaussian mixture model (GMM). In other words,
the latent space of an autoencoder learnt using reconstruction loss
will look different when we further perform K-means on the latent
space. A recent probabilistic approach to ABC uses KL divergence (KLD)
\cite{lim2020,jiang2017variational} and assumes that both the variational
autoencoder (VAE) \cite{kingma2014stochastic} latent space and clustering
approach are Gaussian distributed. We can compare the similarities
and differences between deep clustering and VAE to better understand
the former. The neural network of VAE and deep clustering are optimized
by backpropagating samples in the latent space to update the encoder
weights. Also, both deep clustering and VAE share a similar goal of
modeling each input image as a sample draw of the Gaussian distribution
in the latent space. However, their Gaussian distributions are different.
VAE enforce samples in the latent space to be closely distributed
by a Gaussian distribution with continuous mean $\mu$ and variance
$\sigma$ i.e. $z\sim\mu+\sigma\cdot\mathcal{N}(0,1)$. Whereas the
latent space of deep clustering approaches a Gaussian mixture model
(GMM) and each Gaussian has a set of fixed parameters i.e. $z\mathcal{\sim N}(\eta_{k},\tau_{k})$,
$\forall k\in K$. The goal of VAE is to enforce all class samples
in the latent space to be distributed by continuous mean and variance.
The goal of deep clustering is to enforce class samples in the latent
space to be distributed by a discrete set of mean and variance i.e.
$N(\eta^{*},\tau^{*})$. Thus, only deep clustering is associated
with clustering, making it more suitable for unsupervised classification.
Deep clustering is not without problem. Mainly, when backpropagating
from a neural network, we require a closed form loss function for
tractability \cite{goodfellow2014generative}. Most deep learning
methods including VAE restrict to the assumption of a KLD between
two Gaussians to obtain a closed form loss function for backpropagating.
When generalizing to $f$-divergence between two Gaussians, closed
form solution does not exist. A well known workaround to this problem
is to approximate the JS divergence (JSD) between two Gaussians using
adversarial net. In fact, adversarial net is quite versatile as shown
in Table 1. For different applications, the discriminator can be seen
as performing a specific task different from the discriminators in
other adversarial net. We distinguish the proposed work from other
``\textbf{adversarial net based X}'' in Table I where \textbf{X}
can be any discriminator task. In GAN, \textbf{X} refers to image
generation. Other instances of ``\textbf{adversarial net based X}''
includes AAE, DASC and etc. In our approach, ``Deep clustering using
adversarial net based clustering loss'' (DCAN), \textbf{X} refers
to clustering. More specifically, in DCAN the JSD between latent space
and clustering is approximated by adversarial net. More details are
discussed in the next section.

\begin{table}
\begin{centering}
\begin{tabular}{|r|c|c|}
\hline 
Adversarial net based X & \multicolumn{2}{c|}{Discriminator}\tabularnewline
\hline 
GAN \cite{goodfellow2014generative} & $\left\{ \begin{array}{c}
if\;x\sim p(data),\;T=1\\
else,\;T=0
\end{array}\right.$ & $x$ is from the dataset?\tabularnewline
\hline 
\multirow{1}{*}{DASC \cite{zhou2018deep}} & $\left\{ \begin{array}{c}
if\;z\sim p(\varsigma^{*}),\;T=1\\
else,\;T=0
\end{array}\right.$ & $z$ is from an assigned subspace?\tabularnewline
\hline 
\multirow{1}{*}{AAE \cite{44904}} & $\left\{ \begin{array}{c}
if\;z\sim\mu+\sigma\cdot\mathcal{N}(0,1),\;T=1\\
else,\;T=0
\end{array}\right.$ & $z$ is from a Gaussian prior?\tabularnewline
\hline 
\multirow{1}{*}{DCAN (proposed)} & $\left\{ \begin{array}{c}
if\;z\sim p(z\mid\theta^{*}),\;T=1\\
else,\;T=0
\end{array}\right.$ & $z$ is from an assigned cluster?\tabularnewline
\hline 
\end{tabular}
\par\end{centering}
\caption{Different strategies of using adversarial net based X}
\end{table}

\subsection{Adversarial net based X}

The optimization of adversarial net \cite{goodfellow2014generative}
in general centers around minimizing the encoder/generator and maximizing
the discriminator. However, as mentioned ``\textbf{adversarial net
based X}'' are versatile and we highlight their similarities and
differences as follows: In GAN, the generator is responsible for generating
an image when given a sample from the latent space. The discriminator
of GAN checks whether the generated image belongs to the distribution
of the original dataset. Similarly, in DCAN we minimize the encoder
and maximize the discriminator but for different purpose. In DCAN,
the encoder receives an image and outputs a sample in the latent space.
GAN trains the generator to be sensitive to the variance of the image
class i.e. small displacement of the sample in the latent space can
result in huge image variance. DCAN trains the encoder to be insensitive
to the class variance i.e. different images from the same class populate
tightly close to the cluster center in the latent space. Both DCAN
and AAE \cite{44904} utilize the adversarial net in totally different
ways despite having similar architecture (encoder acting as generator).
The goal of AAE is identical to VAE i.e. the AAE encoder models a
sample in the latent space as a sum and product of $\mathcal{N}(0,1)$.
On the other hand, the goal of DCAN is not identical to VAE. DCAN
models a sample in the latent space, as the sample draw from a cluster
(e.g. Kmeans or Gaussian mixture model), specifically the sample draw
of a $1-of-K$ cluster. The goal of DASC \cite{zhou2018deep} is subspace
clustering, and it is different from deep clustering. Like principal
component analysis, DASC perform unsupervised learning by decomposing
the dataset into different individual eigenvectors, each eigenvector
as orthogonal to each other as possible. Samples from each classes
should ideally reside on their respective $K^{th}$ subspaces, just
as each sample should ideally fall within their respective $K^{th}$
clusters in deep clustering. In DASC, the latent space of DASC is
trained to behave like subspace clustering. Thus the adversarial net
of all four methods, GAN (models the dataset), AAE (models VAE loss),
DASC (models subspace clustering) and DCAN (models deep clustering)
are optimized in totally different ways as summarized in Table I.

\section{Proposed method}

\begin{figure}
\begin{centering}
\includegraphics[scale=0.35]{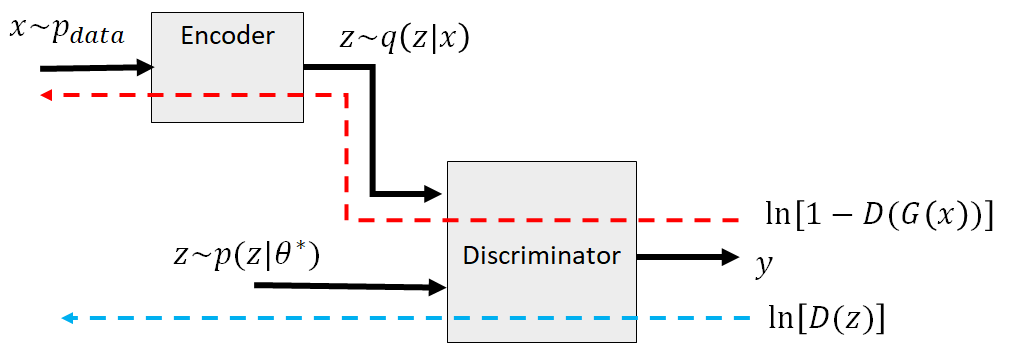}
\par\end{centering}
\caption{Proposed deep clustering loss using adversarial net. It uses the discriminator
and encoder architecture inspired by AAE.}
\end{figure}

One motivation for using a JSD approach is that it does not suffer
from asymmetry unlike KLD \cite{nielsen2019jensen,goodfellow2014generative}.
However, there is no closed form solution available for the JSD between
two data distributions. GAN \cite{goodfellow2014generative} overcome
the need for a closed form solution by employing an adversarial training
procedure. GAN approximates JSD or $D_{JS}\left(p_{data}\parallel p_{g}\right)=E_{x\sim p_{data}}\left[\log D_{G}(x)\right]+E_{x\sim p_{g}}\left[\log\left(1-D_{G}\left\{ x\right\} \right)\right]$
at optimality, where $p_{data}$ is the real data distribution and
$p_{g}$ is the generated data distribution. In the JSD between two
Gaussian distributions for deep clustering, we seek $D_{JS}\left(\:p(z\mid\theta^{*})\parallel q(z\mid x)\:\right)=E_{z\sim p(z\mid\theta^{*})}\left[\ln D(z)\right]+E_{z\sim q(z\mid x)}\left[\ln\left(1-D\left\{ z\right\} \right)\right]$,
whereby $z\sim p(z\mid\theta^{*})$ refers to the sample distributed
from clustering and $z\sim q(z\mid x)$ refers to the sample distributed
from the encoder. Unlike the discriminator of GAN in the original
space, the discriminator of DCAN works in the latent space. We can
visualize $D_{JS}\left(\:p(z\mid\theta^{*})\parallel q(z\mid x)\:\right)$
using Fig. 1. The discriminator tries to distinguish samples from
clustering, $z\sim p(z\mid\theta^{*})$ versus samples generated from
the encoder, $z\sim q(z\mid x)$. 

\subsection{Training DCAN}

We train DCAN by feeding it with $1...N$ positive and negative samples.
We refer to each $n^{th}$ sample of $x$ as $x^{\left(n\right)}$
with dimension $I$. In the forward pass in Fig 1, samples enter the
discriminator through the encoder and clustering. Samples generated
from clustering$^{1}$ are positive i.e. $T=1$ and samples generated
from encoder are negative i.e. $T=0$. We define the neural network
output and target label as $y=\left\{ 0,1\right\} $ and $T=\left\{ 0,1\right\} $
respectively. Both $y$ and $T$ have the same dimension $d$. We
update the discriminator weight using DCAN loss in eqn (1). Since
the samples in the latent space displaces each time the encoder weight
changes, clustering parameters $\theta_{k}=\{\eta_{k},\tau_{k},\pi^{(n)}\forall k\}$
has to be re-computed each time the discriminator weight is updated.
The encoder $q(z\mid x)$ maps $x$ to $z$ in the latent space. However,
we desire to minimize between $q(z\mid x)$ and $p(z\mid\theta^{*}).$
Meaning that we desire to produce a sample $z$ from $q(z\mid x)$
that is as close as possible to the sampling from $p(z\mid\theta^{*})$.
Similarly, DCAN loss in eqn (1) is used to update the encoder weight.
\begin{equation}
\begin{array}{c}
L_{DCAN}=\left\{ \frac{1}{N}\sum_{n=1}^{N}\;\underset{x^{\left(n\right)}\sim p_{data}}{E}\left[\ln\left(1-D\left(G(x^{\left(n\right)})\right)\right)\right]\right\} _{T=0}\\
+\left\{ \frac{1}{N}\sum_{n=1}^{N}\;\underset{z^{\left(n\right)}\sim p(z|\theta^{*})}{E}\left[\ln D\left(z^{\left(n\right)}\right)\right]\right\} _{T=1}
\end{array}
\end{equation}

\section{Experiments}

We selected some benchmarked datasets used by deep clustering for
our experiments in Tables 2. There are several factors affecting deep
clustering. i) The clustering algorithm used in the latent space e.g.
Kmeans or GMM. ii) The number of hidden layers and nodes of the neural
network. iii) The gradient ascent method used for training the loss
functions. iv) The activation functions used. For DCAN, we use $tanh$
for the hidden layers and $sigmoid$ for the output. We set the clustering
iteration to one in Table 2 and a sufficient statistics of at least
600 points in the raw latent space to estimate the cluster parameters.
The number of cluster is the same as the number of classes. We use
a minibatch size of $N=16$ samples, 500 iterations and we use gradient
ascent with momentum with a learning rate between $10^{-3}$ and $10^{-5}$.
We use accuracy (ACC) \cite{CHH05} to evaluate the performance of
our clustering.  \textbf{MNIST:} A well cited 10 digit classes
dataset with no background but it has more samples than USPS. The
train and test sample size is at 50K and 10K. Our settings are $196-384-256$
for the encoder and $256-16-1$ for the discriminator. The input dimension
is originally at $28\times28$ but we have downsampled it to $14\times14$
and flattened to 196. In Table 2, DCAN using raw pixel information
obtained $0.8565$ for ACC which is on par or better than 
deep clustering methods including DEC, ABC and DC-GMM. However
on MNIST, most deep clustering methods could not obtain the performance
of recent deep learning methods such as KINGDRA-LADDER-1M (a.k.a.
KINGDRA) and IMSAT as their goals are much more ACC result oriented.
For KINGDRA, it combines several ideas together including ladder network
\cite{rasmus2015semi} and psuedo-label learning \cite{lee2013pseudo}
for the neural network and using ensemble clustering. In DAC, their
approach is based on pairwise classification. In IMSAT-RPT, the neural
network was trained by augmenting the training dataset. In our case,
we simplified our focused on deep clustering using adversarial net.
DCAN penalizes samples for being an outlier from their cluster centers
and does so uniquely by using the discriminator to penalize the encoder.
The original objective of deep clustering is specifically aimed at
clustering. In some datasets, the latent space of this objective do
not necessarily co-serve as the most important factor for raw pixel
feature extraction. Thus, we also used ResNet18 pretrained model as
a feature extractor for MNIST, prior to deep clustering. This simple
step allows the ACC for DCAN to improved to 0.995 (not shown in Table
2) and in fact outperforms KINGDRA.\textbf{ Reuters-10k: }The Reuters-10k dataset
according to \cite{tian2017deepcluster,xie2016unsupervised} contains
10K samples with 4 classes and the feature dimension is 2000. End-to-end
learning is performed on this dataset. We compared DCAN to other unsupervised
deep learning approaches. DCAN uses $2000-100-500$ and $500-100-1$
respectively for the encoder and discriminator. In Table 2, we observed
that Kmeans alone can achieve an ACC of 0.6018 on the raw feature
dimension. Most deep learning methods can achieve between 0.69 to
0.73 on this dataset. DCAN was able to obtain the best ACC at 0.7867.\textbf{
CIFAR10: }A 10 classes object categorization dataset with 50K training
and 10K testing of real images. On CIFAR10, most  methods have difficulty performing end-to-end learning. As compared to deep
ConvNet which utilizes convolutional neural network and supervised
learning for a specific task (i.e. feature extraction), deep clustering
typically rely on fewer hidden layer in the encoder for end-to-end
learning (i.e. both feature extraction and clustering). Thus, we followed
the experiment setup in KINGDRA \cite{Gupta2020Unsupervised} by using the
``avg pool'' layer of ResNet50 \cite{he2016deep} (ImageNet pretrained)
as a feature extractor with a dimension of 2048. In Table 2, DCAN
uses $2048-1024-512-128$ and $128-16-1$ respectively for the encoder
and discriminator. Despite the stronger performance of DAC, IMSAT
and KINGDRA on MNIST, we outperformed these comparisons on CIFAR10
at $0.5844$. This is despite the fact that both KINGDRA and IMSAT
also use ResNet50 (ImageNet pretrained) for CIFAR10.

\begin{table}
\begin{centering}
\begin{tabular}{|rr|cccc|}
\hline 
Approach &  &  & MNIST &  Reuters10k & CIFAR10\tabularnewline
\hline 
ABC \cite{song2013auto} &  &  & 0.760  & 0.7019 & 0.435\tabularnewline
DEC \cite{xie2016unsupervised} &  &  & 0.843  & 0.7217 & 0.469\tabularnewline
DC-GMM \cite{tian2017deepcluster} &  &  & 0.8555 & 0.6906 & -\tabularnewline
AAE \cite{44904} &  &  & 0.8348  & 0.6982 & -\tabularnewline
IMSAT-RPT \cite{hu2017learning} &  &  & 0.896  & 0.719 & 0.455\tabularnewline
KINGDRA \cite{Gupta2020Unsupervised} &   & & \textbf{0.985}   & 0.705 & 0.546\tabularnewline
DCAN (proposed) &  &  & 0.8565 & \textbf{0.7867} & \textbf{0.5844}\tabularnewline
\hline 
\end{tabular}
\par\end{centering}
\centering{}\caption{ACC Benchmark}
\end{table}

\section{Conclusion}

The objective of deep clustering is to optimize the autoencoder latent
space with clustering information. The key to this technique lies
in a loss function that minimizes both clustering and encoder in the
latent space. Despite the recent success of deep clustering, there
are some concerns: i) How to use probabilistic approach for the loss
function? ii) Can we backpropagate the neural network when there is
no closed form solution for the probabilistic function? A JS divergence
approach does not suffer from asymmetry like KL divergence loss. Unlike
the latter, there is no closed form solution to the JS divergence
when backpropagating the network. This paper addresses the above two
concerns by proposing a novel deep clustering approach based on adversarial
net to estimate JS divergence for deep clustering.

\bibliographystyle{elsarticle-num}
\phantomsection\addcontentsline{toc}{section}{\refname}\bibliography{allmyref}

\section{Appendix}

\subsection{Deep clustering}

The simplest form of deep clustering uses the Euclidean distance loss
in the Autoencoder based clustering (ABC) \cite{song2013auto} approach.
In ABC, the deep clustering objective optimizes the neural network
to be affected by Kmeans clustering or $\mathcal{L}_{2}$ loss function
in eqn (1). On the contrary, traditional reconstruction loss or $\mathcal{L}_{1}$
does not utilize any information from Kmeans.
\begin{equation}
\begin{array}{c}
\underset{w,b}{min}\;\left\{ \mathcal{L}_{1}+\lambda\cdot\mathcal{L}_{2}\right\} \\
=\underset{w,b}{max}\;-\frac{1}{2}\left(T-y\right)^{2}-\lambda\left\{ -\frac{1}{2}\left(h_{z}-\eta^{*}\right)^{2}\right\} 
\end{array}
\end{equation}

\subsection{Probabilistic variant of deep clustering}

Training a deep clustering algorithm such as ABC in eqn (2) requires
a closed-form equation for the clustering loss function. In higher
dimensional latent space, the Euclidean function will become less
robust due to the curse of dimensionality. The probabilistic approach
of ABC is based on the ``KLD between two Gaussians'' in \cite{lim2020,jiang2017variational}.
When the probabilistic approach is no longer a ``KLD between two Gaussians'',
the main problem is that: 
\begin{quote}
a) the loss function can become non-trival to differentiate for backprogation. 
\end{quote}
Instead, we propose a new way to train deep clustering without facing
such constrains i.e. we discard the use of probabilistic approach
in deep clustering. Instead, we approximate deep clustering using
adversarial net. There exist a relationship between deep clustering and adversarial net in two simple steps: 
\begin{quote}
a) \textbf{From ABC to KLD to JSD:} We show that under certain condition
(i.e. unit variance assumption), ABC is identical to KLD and thus
related to JSD. Both KLD and JSD are also under the family of the
$f$-divergence.
\\
b) \textbf{Adversarial net as JSD:} We approximate JSD using adversarial
net. This is possible since adversarial net approaches JSD when the
training becomes optimal. 
\end{quote}
Our main goal is to find a relationship between adversarial net and
deep clustering. First, we establish the relationship between
KLD and ABC as seen in \cite{lim2020}. KLD measures the probabilistic
distance between two distributions. The distributions of the latent
space and deep clustering space are defined as $q(z\mid x)$ and $p(z\mid\theta)$ respectively. Deep clustering space is the latent
space partitioned using GMM or Kmeans. Whereby in GMM, $z=\left\{ z^{(n)}\right\} _{n=1}^{N}\in\mathbb{R}^{Z}$
and $\theta=\{\eta,\tau,\zeta\}$, which are the mean $\eta$, precision
$\tau$ and assignment parameter $\boldsymbol{\zeta}$ for
$K$ number of clusters and $N$ number of samples. Specifically, $\eta=\left\{ \eta_{k}\right\} _{k=1}^{K}\in\mathbb{R}^{Z},\tau=\left\{ \tau_{k}\right\} _{k=1}^{K}\in\mathbb{R}^{Z},\zeta_{k}^{(n)}\in\{0,1\}$
and $\zeta^{(n)}$ is a $1-of-K$ vector. Thus, the KLD based clustering
loss in \cite{lim2020} as follows:
\begin{equation}
\begin{array}{c}
D_{KL}\left(\;p(z\mid\theta)\parallel q(z\mid x)\;\right),\\
p(z\mid\theta)=\prod_{k=1}^{K}\mathcal{N}(z\mid\eta_{k},\left(\tau_{k}\right)^{-1})^{\zeta_{k}},\\
q(z\mid x)=\mu+\sigma\cdot\mathcal{N}(0,1)
\end{array}
\end{equation}

The problem is how do we define KLD in terms of two Gaussians, instead
of a Gaussian and a GMM in eqn (3), which is intractable. To overcome
this, we re-express the GMM term in $p(z\mid\theta)$ as follows
\begin{equation}
\begin{array}{c}
p(z\mid\theta^{*})=\underset{k}{\arg\max\;}p(z\mid\theta_{k})\\
=\mathcal{N}(z\mid\eta^{*},\left(\tau^{*}\right){}^{-1})
\end{array}
\end{equation}

We define $p(z\mid\theta^{*})$ as the $k^{th}$ Gaussian component
of Kmeans or GMM that generates sample $z$ in the latent space. i.e.
we use $\zeta$ ( cluster assignment) to compute $\eta^{*}$ (mean) and $\tau^{*}$ (precision). Thus, the KLD of GMM and latent space in eqn (5) can
now become a KLD between two Gaussian distributions. Also, a closed
formed equation is available for the latter.
\begin{equation}
\begin{array}{c}
D_{KL}\left(\:\mathcal{N}(z\mid\eta^{*},\left(\tau^{*}\right){}^{-1})\parallel\mathcal{N}(z\mid\mu,\varrho^{2})\:\right)\end{array}
\end{equation}

The relationship between KLD and ABC can be explained in Lemma 1: If
we discard the second order terms i.e. $\varrho^{2}$ and $\tau$
in eqn (7), the KLD reverts back to the original ABC loss \cite{song2013auto,yang2017towards}
in eqn (2).
\begin{equation}
D_{KL}\left(\mathcal{N}(z\mid\eta^{*})\parallel\mathcal{N}(z\mid\mu)\right)=\frac{1}{2}\left(\eta^{*}-\mu\right)^{2}
\end{equation}

To relate KLD to JSD, we simply recall JSD as the averaging between
two KLDs in eqn (8). We next discuss how to relate adversarial net
to JSD.

\begin{equation}
\begin{array}{c}
D_{JS}\left(\:p(z\mid\theta^{*})\parallel q(z\mid x)\:\right)\\
=\frac{1}{2}D_{KL}\left(p\parallel\frac{p+q}{2}\right)+\frac{1}{2}D_{KL}\left(q\parallel\frac{p+q}{2}\right)
\end{array}
\end{equation}

\begin{lemma}\label{label of lemma}
     
Relating deep clustering  in eqn (2) to probabilistic deep clustering in eqn (5): We show that under assumption
of ``unit variance'', the $KL$D between the encoder latent space
and GMM reverts back to the ABC loss. 

\end{lemma}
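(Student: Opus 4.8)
The plan is to specialize the closed-form KL divergence between two Gaussians, which is the quantity displayed in eqn (5), and to show that the unit-variance hypothesis collapses it to the quadratic ABC penalty of eqn (2). First I would write out the standard closed form for the univariate case,
\begin{equation}
D_{KL}\left(\mathcal{N}(\eta^{*},(\tau^{*})^{-1})\parallel\mathcal{N}(\mu,\varrho^{2})\right)=\frac{1}{2}\ln\left(\tau^{*}\varrho^{2}\right)+\frac{(\tau^{*})^{-1}+(\eta^{*}-\mu)^{2}}{2\varrho^{2}}-\frac{1}{2},
\end{equation}
identifying the first argument with the selected cluster component $\mathcal{N}(z\mid\eta^{*},(\tau^{*})^{-1})$ of eqn (4) and the second with the encoder marginal $q(z\mid x)=\mathcal{N}(\mu,\varrho^{2})$. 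This reduces the lemma to a substitution into a known identity.

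The second step is to impose the ``unit variance'' assumption $\tau^{*}=1$ and $\varrho^{2}=1$, so that the two Gaussians degenerate into the second-order-free forms $\mathcal{N}(z\mid\eta^{*})$ and $\mathcal{N}(z\mid\mu)$ of eqn (6). Under this substitution the log-ratio term $\frac{1}{2}\ln(\tau^{*}\varrho^{2})$ vanishes, the precision/variance term reduces to $\frac{1}{2}+\frac{1}{2}(\eta^{*}-\mu)^{2}$, and the additive $\frac{1}{2}$ cancels the trailing $-\frac{1}{2}$, leaving
\begin{equation}
D_{KL}\left(\mathcal{N}(z\mid\eta^{*})\parallel\mathcal{N}(z\mid\mu)\right)=\frac{1}{2}(\eta^{*}-\mu)^{2},
\end{equation}
which is eqn (6). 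Finally I would match this to eqn (2) by reading the encoder mean $\mu$ as the latent code $h_{z}$, so that $\frac{1}{2}(\eta^{*}-\mu)^{2}$ coincides with the Euclidean clustering penalty $\frac{1}{2}(h_{z}-\eta^{*})^{2}$ that ABC minimizes in eqn (2), closing the loop between the probabilistic and Euclidean formulations.

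Two points warrant care rather than posing real difficulty. Because the latent space is $Z$-dimensional, one should really begin from the multivariate KLD with diagonal covariances; under unit variance both covariances become the identity, the determinant and trace contributions collapse to dimension-dependent constants that again cancel, and the divergence factorizes coordinatewise into $\frac{1}{2}\|\eta^{*}-\mu\|^{2}$, matching the summed squared error of ABC. One must also apply the hypothesis symmetrically to both $\tau^{*}$ and $\varrho^{2}$, since dropping only one would leave a stray logarithm or variance factor and would fail to recover eqn (2) cleanly. I expect the only genuinely delicate step is the bookkeeping of these constant terms so that the $-\frac{1}{2}$ (and its multivariate analogue) cancels exactly, leaving the pure quadratic with the correct $\frac{1}{2}$ prefactor.
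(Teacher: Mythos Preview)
Your proposal is correct and follows essentially the same route as the paper: write the closed-form KLD between two univariate Gaussians, impose unit variance on both the cluster precision and the encoder variance, and observe that the log and constant terms cancel to leave $\tfrac{1}{2}(\eta^{*}-\mu)^{2}$, which is the ABC penalty of eqn~(2). Your treatment is in fact a bit more careful than the paper's (correct $\tfrac{1}{2}$ coefficient on the log term, explicit identification $\mu\leftrightarrow h_{z}$, and the remark on the diagonal multivariate case), but the argument is the same.
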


\begin{proof}

For a $D_{KL}$ between two Gaussian distributions, there is a unique
closed-form expression available. When we assume unit variance i.e.
$\tau=\sigma=1$, the $D_{KL}$ reverts back to the original Euclidean
distance based clustering loss in eqn (1) as follows 
\begin{equation}
\begin{array}{c}
D_{KL}\left(p(z\mid\theta)\parallel q(z\mid x)\right)\;\;s.t.\:\:\{\tau=\sigma=1\}\\
=D_{KL}\left(\mathcal{N}(z_{n}\mid\eta^{*},\left(\tau^{*}\right){}^{-1})\parallel\mathcal{N}(z_{n}\mid\mu,\sigma)\right)\\
=\ln\tau^{*}+\ln\sigma+\frac{\left(\tau^{*}\right)^{-1}+\left(\eta^{*}-\mu\right)^{2}}{2\sigma^{2}}-\frac{1}{2}\\
=\frac{1}{2}\left(\eta^{*}-\mu\right)^{2}
\end{array}
\end{equation}

\end{proof}

\subsection{Deep clustering using Adversarial net approaches JSD }

The problem with using JSD for deep clustering is that there is rarely
a closed form solution available for a ``JSD between two Gaussian
distributions''. GAN overcome this by employing an adversarial training
procedure that approximates $D_{JS}\left(p_{data}\parallel p_{g}\right)$
at optimality, where $p_{data}$ is the real data distribution and
$p_{g}$ is the generated data distribution as follows \cite{goodfellow2014generative}:
\begin{equation}
\begin{array}{c}
D_{JS}\left(p_{data}\parallel p_{g}\right)\\
=\underset{x\sim p_{data}}{E}\left[\log D_{G}(x)\right]+\underset{z\sim p(z)}{E}\left[\log\left(1-D_{G}\left\{ G(z)\right\} \right)\right]\\
=\underset{x\sim p_{data}}{E}\left[\log D_{G}(x)\right]+\underset{x\sim p_{g}}{E}\left[\log\left(1-D_{G}\left\{ x\right\} \right)\right]
\end{array}
\end{equation}

Unlike the discriminator of GAN in the original space, the discriminator
of DCAN works in the latent space. Despite that, we can easily reformulate
GAN into DCAN as follows:
\begin{equation}
\begin{array}{c}
D_{JS}\left(p(z\mid\theta^{*})\parallel q(z\mid x)\right)\\
=\underset{z\sim p(z\mid\theta^{*})}{E}\left[\ln D(z)\right]+\underset{x\sim p_{data}}{E}\left[\ln\left(1-D\left\{ G(x)\right\} \right)\right]\\
=\underset{z\sim p(z\mid\theta^{*})}{E}\left[\ln D(z)\right]+\underset{z\sim q(z\mid x)}{E}\left[\ln\left(1-D\left\{ z\right\} \right)\right]
\end{array}
\end{equation}
Whereby $E_{z\sim p(z\mid\theta^{*})}\left[\cdot\right]$ refers to
the expectation function where the sample is distributed from deep
clustering space and $E_{z\sim q(z\mid x)}\left[\cdot\right]$ refers
to the expectation function where the sample is distributed from the
latent space. We refer to Lemma 2 for the claim on eqn (10).

\begin{lemma}\label{label of lemma}

The adversarial net based deep clustering loss by DCAN can be shown to approach JSD
at optimum: 

\end{lemma}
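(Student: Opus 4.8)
The plan is to mirror the optimal-discriminator argument of \cite{goodfellow2014generative}, but carried out entirely in the latent space, with $p(z\mid\theta^{*})$ playing the role of the real data distribution (the positive, $T=1$ samples) and $q(z\mid x)$ playing the role of the generator (the negative, $T=0$ samples). I would write the adversarial objective of eqn (10) as the value function
\[
V(D)=\underset{z\sim p(z\mid\theta^{*})}{E}\left[\ln D(z)\right]+\underset{z\sim q(z\mid x)}{E}\left[\ln\left(1-D(z)\right)\right],
\]
and then re-express both expectations as a single integral over the latent variable, $V(D)=\int_{z}\left[\,p(z\mid\theta^{*})\ln D(z)+q(z\mid x)\ln\left(1-D(z)\right)\,\right]dz$. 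The goal is to perform the inner maximization over the discriminator and then identify what the resulting criterion measures.

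First I would fix the encoder, so that $q(z\mid x)$ is held constant, and maximize the integrand pointwise in $z$. For each fixed $z$ the integrand has the form $a\ln(y)+b\ln(1-y)$ with $a=p(z\mid\theta^{*})$, $b=q(z\mid x)$ and $y=D(z)\in(0,1)$; this is strictly concave with unique maximizer $y=a/(a+b)$. Hence the optimal discriminator is
\[
D^{*}(z)=\frac{p(z\mid\theta^{*})}{p(z\mid\theta^{*})+q(z\mid x)}.
\]
This single computation is the step that does the real work.

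Next I would substitute $D^{*}$ back into $V$ to obtain the virtual criterion $C=V(D^{*})$, namely
\[
C=\underset{z\sim p(z\mid\theta^{*})}{E}\left[\ln\frac{p(z\mid\theta^{*})}{p(z\mid\theta^{*})+q(z\mid x)}\right]+\underset{z\sim q(z\mid x)}{E}\left[\ln\frac{q(z\mid x)}{p(z\mid\theta^{*})+q(z\mid x)}\right].
\]
The standard manipulation is to rewrite each denominator as $2\cdot\frac{p+q}{2}$, which extracts an additive $-\ln 2$ from each term and turns the remaining terms into $D_{KL}\!\left(p\parallel\frac{p+q}{2}\right)$ and $D_{KL}\!\left(q\parallel\frac{p+q}{2}\right)$. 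Recalling the decomposition of JSD as the average of these two KLDs (eqn (8)), this yields $C=-\ln 4+2\,D_{JS}\!\left(p(z\mid\theta^{*})\parallel q(z\mid x)\right)$, which is precisely the claim that the adversarial objective approaches JSD, up to the constant $-\ln 4$, at the discriminator optimum.

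The main obstacle is not the algebra but justifying that the optimum $D^{*}$ is attained. The pointwise maximization presupposes that the discriminator has enough capacity to represent the ratio $D^{*}(z)$ at every $z$, and that $p(z\mid\theta^{*})$ and $q(z\mid x)$ admit densities on a common support so that the integral form of $V$ is valid; here $p(z\mid\theta^{*})$ is the selected Gaussian component $\mathcal{N}(z\mid\eta^{*},(\tau^{*})^{-1})$, so this is exactly where the Gaussian modelling assumption enters. A fully rigorous version would additionally invoke the convexity-in-function-space argument of \cite{goodfellow2014generative} to show that the alternating encoder/discriminator updates converge to this saddle point. Since the lemma only asserts the \emph{value} of the objective at the discriminator optimum, I would treat convergence as the hypothesis ``at optimum'' rather than prove it, and confine the proof to the pointwise maximization and the KLD-to-JSD rewriting above.
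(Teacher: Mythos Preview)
Your proposal is correct and follows essentially the same route as the paper: fix the encoder, take the optimal discriminator $D^{*}=p/(p+q)$, substitute back, and extract the additive $-\ln 4$ constant to identify the remainder as $2\,D_{JS}(p\parallel q)$. Your derivation is in fact more careful than the paper's (you actually derive $D^{*}$ via pointwise concavity and obtain the exact identity $C=-\ln 4+2D_{JS}$, whereas the paper simply asserts $D^{*}$ and states the relation as an inequality), but the argument is the same.
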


\begin{proof}

For the sake of brevity, we refer to $p(z\mid\theta^{*})$ and $q(z\mid x)$
as $p$ and $q$ respectively. Optimal discriminator occurs when
$G$ is fixed, i.e. $D=\frac{p}{p+q}$.  Substituting $D$, we define
the LHS and RHS below. Lastly, if we subject $p=q=1$ on both sides, we can validate the claim on eqn (10).

\begin{equation}
\begin{array}{c}
\boldsymbol{LHS}:E_{z\sim p}\left[\ln D(z)\right]+E_{x\sim p_{data}}\left[\ln\left(1-D\left\{ G(x)\right\} \right)\right]\;\;s.t.\:\:\{D=\frac{p}{p+q}\}\\
=E_{z\sim p}\left[\log D(z)\right]+E_{z\sim q}\left[\log\left(1-D(z)\right)\right]\;\;s.t.\:\:\{D=\frac{p}{p+q}\}\\
=E_{z\sim p}\left[\ln\frac{p}{p+q}\right]+E_{z\sim q}\left[\ln\frac{q}{p+q}\right]\\
=\int_{z}\:p\ln\frac{p}{p+q}+q\ln\frac{q}{p+q}\:dz\\
\\
\boldsymbol{RHS}:D_{JS}\left(p\parallel q\right)=\frac{1}{2}\int\:p\ln\frac{2\cdot p}{p+q}+q\ln\frac{2\cdot p}{p+q}\:dz\\
=\frac{1}{2}\int_{z}\:p\left\{ \ln\frac{p}{p+q}+\ln2\right\} +q\left\{ \ln\frac{q}{p+q}+\ln2\right\} \:dz
\end{array}
\end{equation}
\\
Thus, $LHS\leq2RHS-2\log2$.

\end{proof}

\end{document}